\newtheorem{prop}{Proposition}
\title{Autoencoding a Soft Touch to Learn Grasping\\ from On-land to Underwater\thanks{A preprint submitted to the \textbf{Advanced Intelligent Systems} for review.}}
\author{
    {Ning Guo, Xudong Han, Xiaobo Liu, Jiansheng Dai}\\
    Department of Mechanical and Energy Engineering\\
    Southern University of Science and Technology\\
    Shenzhen, China 518055\\
    \And
    {Shuqiao Zhong, Zhiyuan Zhou, Jian Lin}\\
    Department of Ocean Science and Engineering\\
    Southern University of Science and Technology\\
    Shenzhen, China 518055\\
    \And
    {Fang Wan*}\\
    School of Design\\
    Southern University of Science and Technology\\
    Shenzhen, China 518055\\
    \texttt{wanf@sustech.edu.cn} \\
    \And
    {Chaoyang Song*}\\
    Department of Mechanical and Energy Engineering\\
    Southern University of Science and Technology\\
    Shenzhen, China 518055\\
    \texttt{songcy@ieee.org} \\
}
\begin{document}
\maketitle

\begin{abstract}
    Robots play a critical role as the physical agent of human operators in exploring the ocean. However, it remains challenging to grasp objects reliably while fully submerging under a highly pressurized aquatic environment with little visible light, mainly due to the fluidic interference on the tactile mechanics between the finger and object surfaces. This study investigates the transferability of grasping knowledge from on-land to underwater via a vision-based soft robotic finger that learns 6D forces and torques (FT) using a Supervised Variational Autoencoder (SVAE). A high-framerate camera captures the whole-body deformations while a soft robotic finger interacts with physical objects on-land and underwater. Results show that the trained SVAE model learned a series of latent representations of the soft mechanics transferrable from land to water, presenting a superior adaptation to the changing environments against commercial FT sensors. Soft, delicate, and reactive grasping enabled by tactile intelligence enhances the gripper's underwater interaction with improved reliability and robustness at a much-reduced cost, paving the path for learning-based intelligent grasping to support fundamental scientific discoveries in environmental and ocean research.
\end{abstract}

\keywords{Underwater Grasping \and Soft Robotics \and Tactile Learning}

\section{Introduction}
\label{sec:Intro}

    Collecting delicate deep-sea specimens of geological or biological interests with robotic grippers and tools is central to supporting fundamental research and scientific discoveries in environmental and ocean research \citep{Feng2022DeepSea, Gong2021Soft}. The human fingers are dexterous in object manipulation thanks to the finger's musculoskeletal biomechanics and skin's tactile perception even in harsh environments such as underwater \citep{Billard2019Trends, Kirthika2019Review}. Much research has been devoted to skilled object manipulation in daily life scenarios \citep{Ciocarlie2009Hand}. However, limited research focuses on transferring such capabilities to an underwater environment \citep{Mura2018Soft}. The ambient environment significantly challenges visual and tactile feedback integration while performing object grasping for visual identification under fluidic interference on the surface of physical interaction \citep{Capalbo2022Soft, Galloway2016Soft}. As a challenging task for humans, designing and developing robotic solutions for reactive and reliable grasping becomes even more complicated when the end-effector is fully submerged underwater \citep{Stuart2017Ocean}.

\subsection{Design towards Soft Grasping for Ocean Exploration}
\label{sec:Intro-UnderwaterGrasping}

    Object grasping is essential for environmental and ocean research to collect in situ specimens, where a trend towards softness in gripper design shows a growing adoption over the years \citep{Licht2016Universal}. Classical research on underwater grasping mainly focused on a direct translation of mechanical grippers made from rigid materials with waterproof design for all components, including the actuators, mechanisms, and sensors, resulting in a bulky design that is usually difficult for system integration \citep{Yuh2000Design}. Previous research reports a modular continuum finger for dexterous sub-sea manipulation with force and slip sensing, where the complex integration of a range of mechanical, electrical, and computing subsystems limits the use of this prototype out of a laboratory testing tank \citep{Lane1999Amadeus}. Another submarine gripper was developed as part of the European project TRIDENT \citep{Bemfica2014Three}, demonstrating dexterity for executing grasping and manipulation activities, but suffers from challenges when interacting with the delicate subsea environment and objects.
   
    Recent development in soft robotics adopts a different approach to leverage material softness for grasping \citep{Wang2023Inflatable}. The advantage of soft grippers for underwater scenarios is a systematic integration of fluidic actuation, motion transmission, and form-closed adaptation enabled by the soft, lightweight, low-cost material and fabrication against an aquatic environment with a reduced complexity using simple open-loop control \citep{Gong2021Soft}. These soft grippers had demonstrated successful, compliant interaction with various objects underwater \citep{Wang2021Safe}. A recent review shows an emerging research gap in introducing sensory capabilities to soft grippers underwater for closed-loop grasping feedback \citep{Mazzeo2022Marine}.
    
\subsection{The Need for Vision-based Tactile Grasping Underwater}
\label{sec:Intro-ReviewTactile}

    Inspired by the tactile perception of human fingers, a wide range of robotic research has been devoted to integrating with object grasping in industrial or daily life settings \citep{Bao2023Integrated}. Current research on tactile perception often leverages material softness for skin-like design \citep{Kirthika2019Review}. Recent work in 3D Tactile Tensegrity expanded the adoption of tactile sensing to the underwater environment, presenting a promising direction through the integration of soft self-powered triboelectric nanogenerators and deep-learning-assisted data analytics for underwater exploration \citep{Xu2023Deep}. While recent work gave an exhaustive investigation of tactile sensors and their applications in intelligent systems \citep{Liu2020Recent}, there is also an emerging field of vision-based tactile sensors in robotics under-represented in this field \citep{Li2022Implementing}. 

    Vision-based tactile perception leverage machine vision to provide multi-modal contact information with detailed spatial resolution \citep{Shimonomura2019Tactile}. The focus is to deploy soft mediums that deform under external forces and infer tactile information from visual observation \citep{Zhang2022Hardware}. Sato et al. \citep{Sato2010Finger} built a linear approximation model to estimate the contact forces by tracking two colored spherical markers arranged at different depths of an elastomer surface. Yamaguchi et al. \citep{Yamaguchi2019Tactile} presented another low-order approximation model to infer the contact forces from observed makers' variations in the camera. Unfortunately, the current literature has not yet explored the adoption of vision-based tactile robotics in the underwater scenario. 

\subsection{Machine Learning for Latent Intelligence in Tactile Robotics}

    The performance of machine learning algorithms heavily depends on the choice of data representation \citep{Bengio2014Representation}. When projecting complex soft robotics deformation into image space \citep{Yuan2017Gelsight}, a growing trend of research is devoted to treating the representation of a captured image as the latent variables of an appropriate generative model \citep{Doersch2016Unsupervised}. The generative models are usually highly interpretable in understanding the causal relations of the observations \citep{Kingma2014AutoEncoding}, making it a potential solution to increase the robustness of vision-based, soft, tactile sensing underwater where the environmental uncertainties are much worse than the daily life or industrial settings, as shown in \textbf{Figure \ref{fig:PaperOverview}}.

   \begin{figure}[!ht]
        \centering
        \includegraphics[width=1\textwidth]{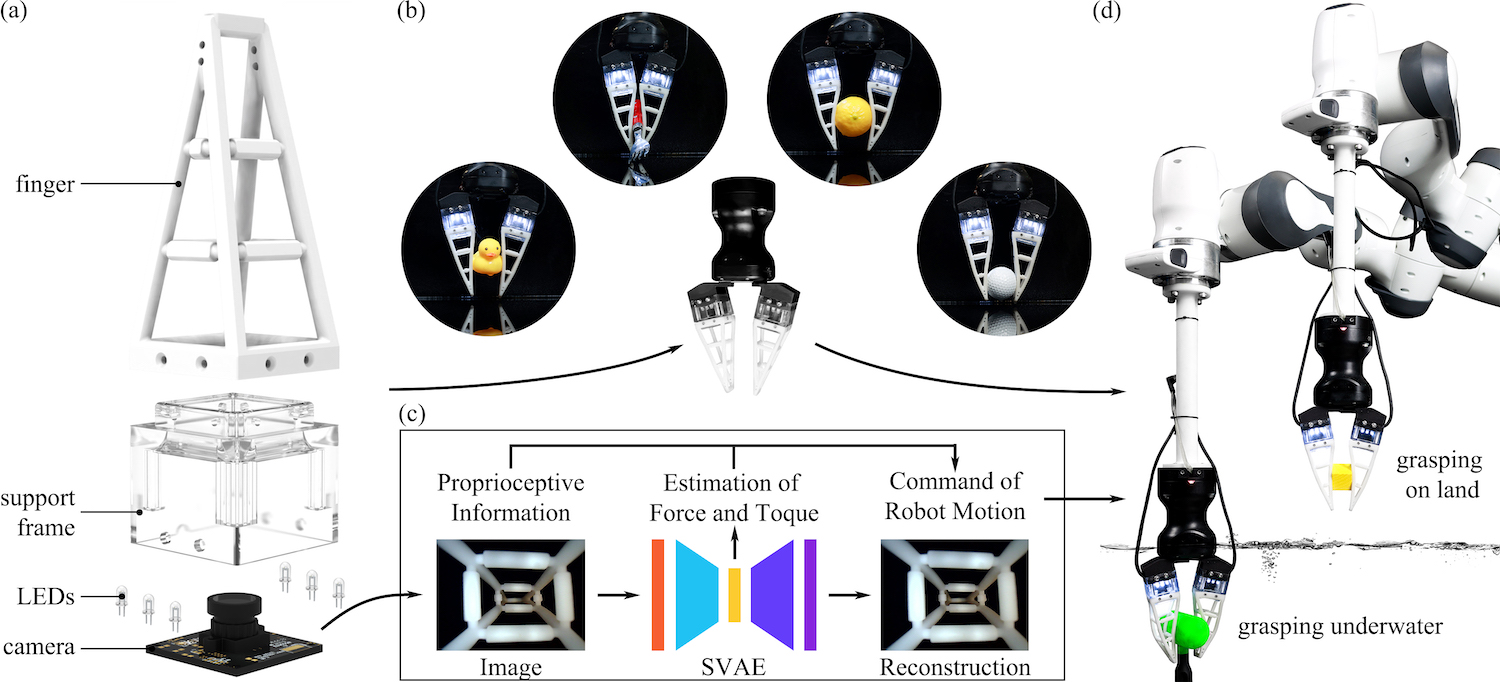}
        \caption{
            \textbf{Overview of the soft visual-tactile learning across land and water using SVAE.} 
            (a) Design of the sensorized soft finger where the camera board is sealed with a silicon layer. 
            (b) The integrated amphibious gripper is transformed by replacing the fingertip of a Robotiq Hand-E gripper with the sensorized soft finger with omni-directional grasping adaptations. The Hand-E gripper has an Ingress Protection (IP) rating of IP67, which is suitable for our underwater experiment in a tank without extra waterproofing. 
            (c) The scheme of visual-tactile learning takes an image of the deformed metamaterial as input, reconstructs the image, and simultaneously predicts the force and torque. 
            (d) The amphibious gripper is mounted on a Franka Emika Panda robot arm to execute force control tasks on land and underwater.
            }
        \label{fig:PaperOverview}
    \end{figure}
    
    Variational autoencoder (VAE) recently emerged as a powerful generative model that learns the distribution of latent variables and is widely used for visual representation in robot learning \citep{Rezende2014Stochastic, Kingma2014AutoEncoding, Takahashi2019Variational}. Since the original publication, many variants and extensions of VAE have been proposed. Semi-supervised VAE was proposed to address the problem of unlabeled data training \citep{Kingma2014SemiSupervised}. Higgins et al. \citep{Higgins2016betaVAE} introduced weight to balance the reconstruction error and regularization of latent variable distribution, enabling learning of a disentangled latent representation. The recent adoption of a supervised VAE model for identifying critical underlying factors for prediction demonstrates the promising potential for application in robotic grasping \citep{Ji2021Multi}.

    This study investigates the transferability of grasping knowledge from on-land to underwater via a vision-based soft robotic finger that learns 6D forces and torques (FT) using a Supervised Variational Autoencoder (SVAE). Using real-time images collected from an in-finger camera that captures the soft finger's whole-body deformations while interacting with physical objects on-land and underwater, we established a learning-based approach to introduce tactile intelligence for soft, delicate and reactive grasping underwater, making it a promising solution to support scientific discoveries in interdisciplinary research.

\section{Results}
\label{sec:Results}
\subsection{In-Finger Vision for a Soft Tactile Finger Underwater}
\label{sec:Results-Design}

    Here we present the in-finger vision design for tactile sensing compatible with both on-land and underwater scenarios, as shown in Figure \ref{fig:PaperOverview}(a). The finger is based on a soft metamaterial with a shrinking cross-sectional geometry towards the tip, capable of omni-directional adaptation on the finger surface to unknown object geometries, enabling a passive form-closure for robotic grasping \citep{Wan2022VisualLearning}. A monocular RGB camera (120 frames per second) is mounted inside a support frame under the finger to obtain high-framerate images of the finger's adaptive deformations at a resolution of $640\times480$ pixels. The support frame is 3D printed with the optically transparent material (Somos\textsuperscript{\textregistered} WaterShed XC 11122). All electronics inside are waterproofed by dipping the camera board, except the lens, into transparent silicon. We added six LEDs to the camera board for improved lighting conditions, resulting in an integrated design of a water-resistant, soft robotic finger with machine vision from the inside.

    Figure \ref{fig:PaperOverview}(b) shows the integration of the proposed finger with a Robotiq's Hand-E gripper, which has an ingress protection rating of IP67 for testing in lab tanks. The proposed soft finger exhibits spatial adaptive deformations, conforming to the object's geometry during physical contact and exhibiting both regular and twisted adaptions for enhanced robustness for grasping, as shown in Figure \ref{fig:PaperOverview}(d). For more intensive use in the field, one can directly mount the soft finger to the tip of existing grippers on an underwater vehicle. We demonstrated the effeteness of using the soft finger by grasping some Yale-CMU-Berkeley (YCB) objects of various shapes and softness underwater or floating on the water surface \citep{Calli2015Benchmarking}. See Movie S1 in the Supplementary Materials for further details. 

    The advantage of the proposed design is a complete separation of the sensory electronics from the soft interaction medium by design, resolving the issues of an enclosed chamber that may suffer from severe surface deformation when used underwater \citep{Li2022Implementing}. Such design enables us to collect real-time image streams of the physical interaction between the soft finger and external object using the in-finger vision, as shown in Figure \ref{fig:PaperOverview}(c), which can be further implemented with generative models, such as the supervised variational autoencoder (SVAE), to provide the tactile perception of grasping interactions, both on land and underwater.

\subsection{Generative Tactile Learning via Supervised Variational Autoencoder}
\label{sec:Results-Learning}

    Here presents a generative learning architecture for tactile perception in both on-land and underwater scenarios with latent explanations using a supervised variational autoencoder (SVAE) in \textbf{Figure \ref{fig:SvaeArchitecture}}.

    \begin{figure}[!ht]
        \centering
        \includegraphics[width=1\textwidth]{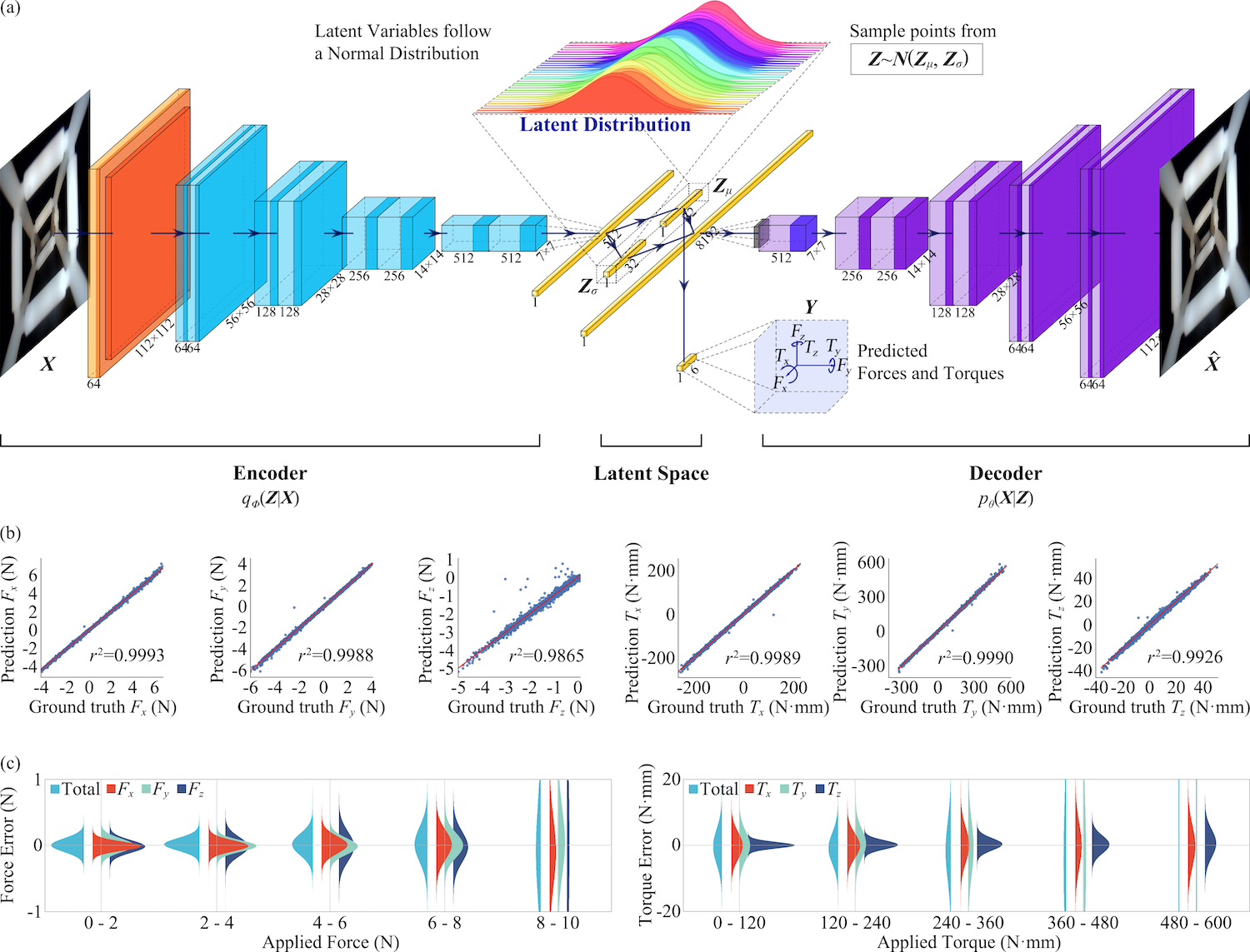}
        \caption{
            \textbf{Latent deformation learning model for the soft metamaterial.} 
            (a) The architecture of the Supervised Variational Autoencoder (SVAE) model, where a VAE is combined with a supervised regression task for force and torque prediction. 
            (b) Predicted force/torque versus the ground truth in each of the six dimensions on the test dataset. 
            (c) Distributions of prediction errors in each 6D force/torque dimension over different ranges.
            }
        \label{fig:SvaeArchitecture}
    \end{figure}

    The generative model is illustrated in Figure \ref{fig:SvaeArchitecture}(a), which includes an encoder $q_\phi(\vectorsym{Z}|\vectorsym{X})$ to process real-time images from the in-finger vision, then processed through a latent space operation to estimate a latent distribution of the interactive physics $\vectorsym{Z}\sim{\vectorsym{N}(\vectorsym{Z}_\mu|\vectorsym{X}_\theta)}$, assuming a normal distribution. Here, we added a force and torque prediction based on the $\vectorsym{Z}_\mu$ to produce the 6D tactile estimation as an auxiliary output. Finally, through a generative decoder $p_\theta(\vectorsym{Z}|\vectorsym{X})$, our model reproduces images of the tactile interactions based on the learned SVAE model. Note that $\theta$ and $\phi$ are the parameters of the encoder and decoder neural networks, which must be optimized during training. Note that the SVAE model's loss function for training is the combination of image reconstruction loss, force/torque prediction loss, and latent representation regularization loss. Following the detailed formulation of the SVAE model in the Materials and Methods section, for tuning with small datasets, we introduced two hyper-parameters, $\alpha$ and $\beta$, to modify the objective function into the following, where the parameter $\alpha \geq{0}$ is used to adjust the relative importance during optimization between the image reconstruction and force/torque prediction tasks.
    \begin{equation}\label{J_alpha,x,y}
        \begin{split}
           \Tilde{L}(\theta,\phi;\vectorsym{X},\vectorsym{Y})= &-\frac{\alpha}{1+\alpha}\|\vectorsym{X}- \vectorsym{\hat{X}}\| - \frac{1}{1+\alpha}\|\vectorsym{Y}- \vectorsym{\hat{Y}}\| + \beta D_{KL}[\vectorsym{N}(\vectorsym{Z}_\mu,\vectorsym{Z}_\sigma)||\vectorsym{N}(\vectorsym{0},\vectorsym{I})],
        \end{split}
    \end{equation}

    Figure \ref{fig:SvaeArchitecture}(b) shows the predicted 6D forces and torques via SVAE against the ground truth. The $R^2$ scores are higher than 0.98 for 6D force and torque predictions, indicating the SVAE model's excellent performance in tactile sensing on the test dataset. We also plot the distributions of prediction errors in each 6D force/torque dimension over different ranges in Figure \ref{fig:SvaeArchitecture}(c). For applied forces ranging between $[0,2)$, $[2,4)$, $[4,6)$, $[6,8)$, and $[8,10)$ N, the standard deviations of the prediction are 0.07, 0.06, 0.09, 0.12, and 0.24 N, respectively. For applied torques ranging between $[0,120)$, $[120,240)$, $[240,360)$, $[360,480)$, and $[480,600)$ N$\cdot$mm, the standard deviations of the prediction are 4.6, 3.9, 6.0, 9.1, and 20.3 N$\cdot$mm, respectively. These results follow a Gaussian distribution with a near-zero mean and an increasing standard deviation as the range becomes more considerable. The force sensing errors are comparable in the $x$ and $y$ axes and more prominent in the $z$ axis, while the torque sensing errors are the least in the $z$ axis. This characteristic is primarily due to the metamaterial's structural design, which is less sensitive to the force along the $z$ axis. 

    We also conducted a comparative study to evaluate the proposed SVAE model against two baseline models, including a ConvNet model for force and torque prediction only and a VAE model for image reconstruction only, with results summarized in \textbf{Table \ref{tab:CompareSVAE}}. 
    \begin{table}[!ht]
        \caption{Comparative analysis of the proposed SVAE's performances.}
        \label{tab:CompareSVAE}
        \begin{center}
            \begin{tabular}{cccc}
            \hline
            \textbf{Models} &
              \textbf{Settings} &
              \textbf{\begin{tabular}[c]{@{}c@{}}Image Reconstruction \\ Error (MSE) \end{tabular}} &
              \textbf{\begin{tabular}[c]{@{}c@{}}Force/Torque Prediction \\ Accuracy (avg.)\end{tabular}} \\ \hline
            ConvNetV              & Vanilla          & $-$                 & $96.04\%$ \\ \hline
            \multirow{6}{*}{SVAE} & $\alpha = 0.001$ & $5.19\times10^{-2}$ & $99.53\%$ \\
                                  & $\alpha = 0.01$  & $5.17\times10^{-2}$ & $99.52\%$ \\
                                  & $\alpha = 0.1$   & $2.02\times10^{-2}$ & $99.53\%$ \\
                                  & $\alpha = 1$     & $9.47\times10^{-3}$ & $99.45\%$ \\
                                  & $\alpha = 10$    & $6.68\times10^{-3}$ & $96.22\%$ \\
                                  & $\alpha = 100$   & $5.65\times10^{-3}$ & $61.46\%$ \\ \hline
            VAE                   & Vanilla          & $5.36\times10^{-3}$ & $-$       \\ \hline
            \end{tabular}
        \end{center}
    \end{table}
    These two models share the same network architecture as the SVAE but are trained separately. However, the ConvNet model is a deep regression network with a convolutional layer that only takes force/torque prediction loss. And the VAE model is a Variational Autoencoder that only takes image reconstruction and latent representation regularization loss. We used mean square error (MSE) between the original and reconstructed images to evaluate the representation learning task and the coefficient of determination $R^2$ to assess the overall force/torque prediction task. See Methods S1 in the Supplementary Materials for further details on training data collection. 

    The SVAE has shown comparable performance over the vanilla VAE in the representation learning task while $\alpha$ is approaching infinity. Meanwhile, SVAE outperforms the deep regression model ConvNet in the force/torque prediction task when $\alpha\leq1$ and the training is focused more on the prediction task, achieving over $99.45\%$ on the validation set. Since SVAE is a multi-task learning framework, the hyper-parameter $\alpha$ is vital in balancing the reconstruction and prediction tasks. Here, $\alpha = 1$ is chosen for all validation tests and real-time experiments. The results show that the co-trained representation learning enhances the force/torque prediction task. See Movie S2 in the Supplementary Materials for a video demonstration of real-time 6D force/torque prediction using the SVAE model in on-land and underwater scenarios. 

\subsection{Land2Water Generalization of Tactile Representation}
\label{sec:Results-LatentRep}

    We also investigated the generalization of tactile representations learned via SVAE in a Land2Water skill transfer problem for tactile sensing in \textbf{Figure \ref{fig:RepresentationResults}}. 
    \begin{figure}[!ht]
        \centering
        \includegraphics[width=1\textwidth]{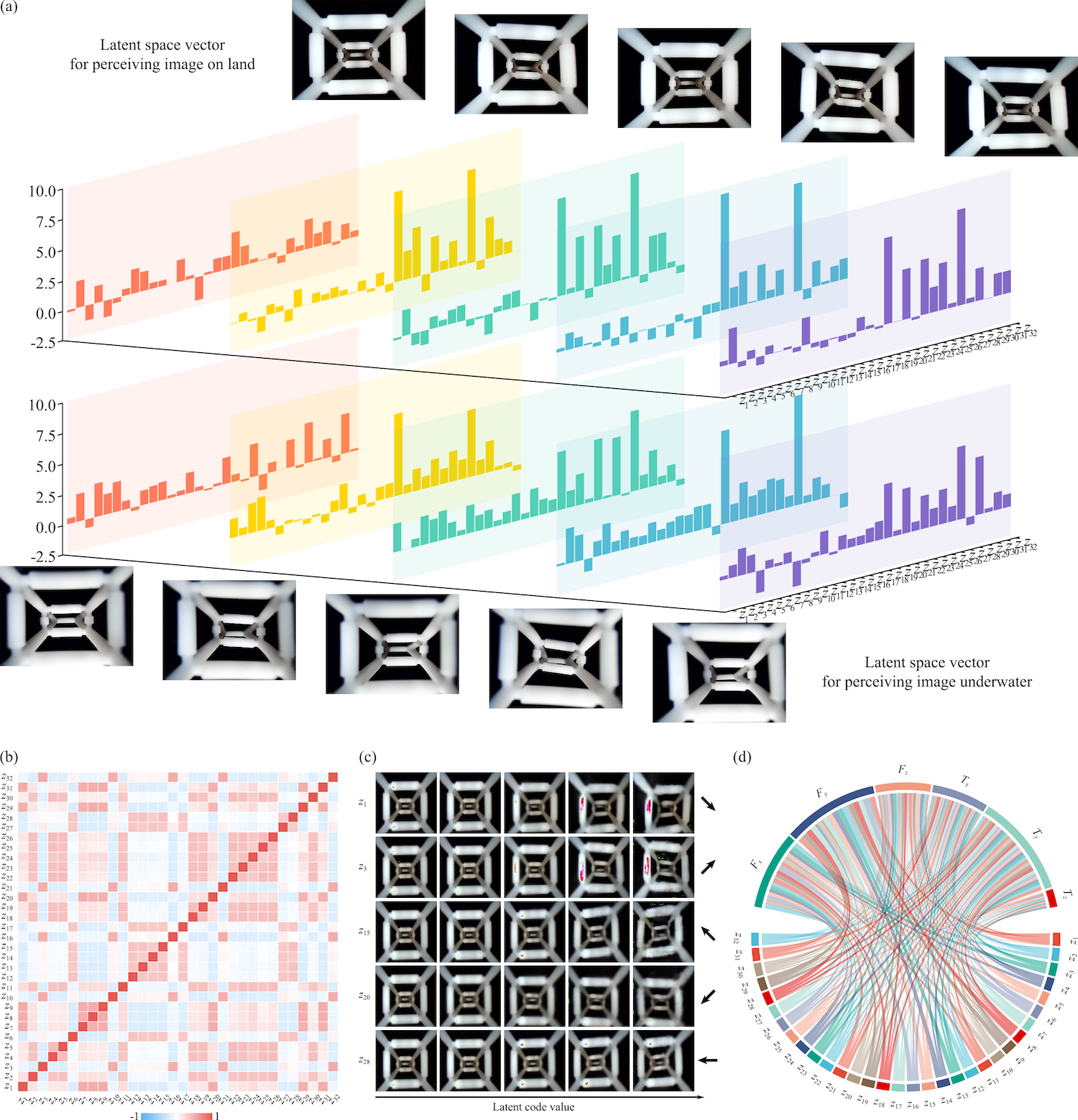}
        \caption{
            \textbf{Representation learning of deformations of the proposed soft metamaterial.} 
            (a) The complex deformations of the soft metamaterial on land and underwater are represented in the latent space.
            (b) Correlation map of learned 32 latent variables. 
            (c) Reconstructed images of varying selected latent variables. 
            (d) The relative correspondences between latent variables and force/torque.
            }
        \label{fig:RepresentationResults}
    \end{figure}
    While implementing the SVAE model, we chose a 32-dimension definition with a balanced trade-off between reconstruction error and dimensional complexity in explanatory power. See Methods S2 in the Supplementary Materials for further discussion. 

    Figure \ref{fig:RepresentationResults}(a) shows the comparison of the 32D latent space vectors for tactile perception between on-land (top) and underwater (bottom) scenarios when the soft finger is experiencing the same deformation delivered by the robotic arm. Five random instances of the in-finger vision are chosen for each scenario and plotted with their corresponding latent variable distributions. We identified a similar distribution between the upper and lower plots for these five random instances. This suggests the transferability of the latent variables' explanatory power in tactile perception between the on-land and underwater scenarios. This is because the learned latent representation could be close to the intrinsic dimension of the soft finger deformation, minimizing the information loss during tactile image encoding. The segment of the soft finger interacting with objects is made from 3D-printed metamaterial without any electronic parts, whose mechanical properties are not affected by the water, indicating the generalization of tactile representation in Land2Water transfer, which is reported for the first time in the vision-based tactile sensing literature.

    The correlation map plotted in Figure \ref{fig:RepresentationResults}(b) suggests that these 32 latent variables learned from our SVAE model are generally unrelated, which is a preferred property for representation learning \citep{Bengio2014Representation}. However, for variable clusters such as $\{Z_7, Z_8, Z_9\}$ and $\{Z_{12},..., Z_{15}\}$, regional correlation is observable at a relatively small scale. We also demonstrate the latent interpolation for the metamaterial's deformation projected in the image plane on selected dimensions of $\{Z_1$, $Z_3$, $Z_{15}$, $Z_{20}$, $Z_{28}\}$ in Figure \ref{fig:RepresentationResults}(c), which gives an intuitive sense of what are the latent variables represent in physical space. For example, we found that $Z_1$ and $Z_3$ are related to pushing right-downward and right-upward when their values go from negative to positive, while $Z_{15}$ and $Z_{20}$ are related to moving left-upward and left-downward. Furthermore, $Z_{28}$ has a prominent horizontal movement. These latent variables are strongly related to representing the complex deformations of the soft metamaterial in terms of image reconstruction but are not disentangled. As shown in Figure \ref{fig:RepresentationResults}(d), the correlation between the 6D force/torque and the 32D latent variables is complex and diversified. For example, the latent variable $Z_{28}$ strongly correlates with $F_y$, which agrees with the reconstructed horizontal movement along the corresponding axis of the camera coordinate. 

\subsection{Land2Water Grasping Knowledge Transfer}
\label{sec:Results-Realtime}
    
    This section presents two experiment results that implement the Land2Water grasping knowledge obtained through tactile sensing, including one for object grasping against location uncertainties and another for tactile sensorimotor grasping adaptability from on-land to underwater scenarios. 

\subsubsection{Object Grasping against Location Uncertainties} 
\label{sec:Results-ObjectGrasping}
    
    This experiment demonstrates the equal necessity of tactile perception when grasping underwater in \textbf{Figure \ref{fig:GraspingTest}}, which is generally acknowledged to increase the robustness in on-land conditions. 
    \begin{figure}[!ht]
        \centering
        \includegraphics[width=1\textwidth]{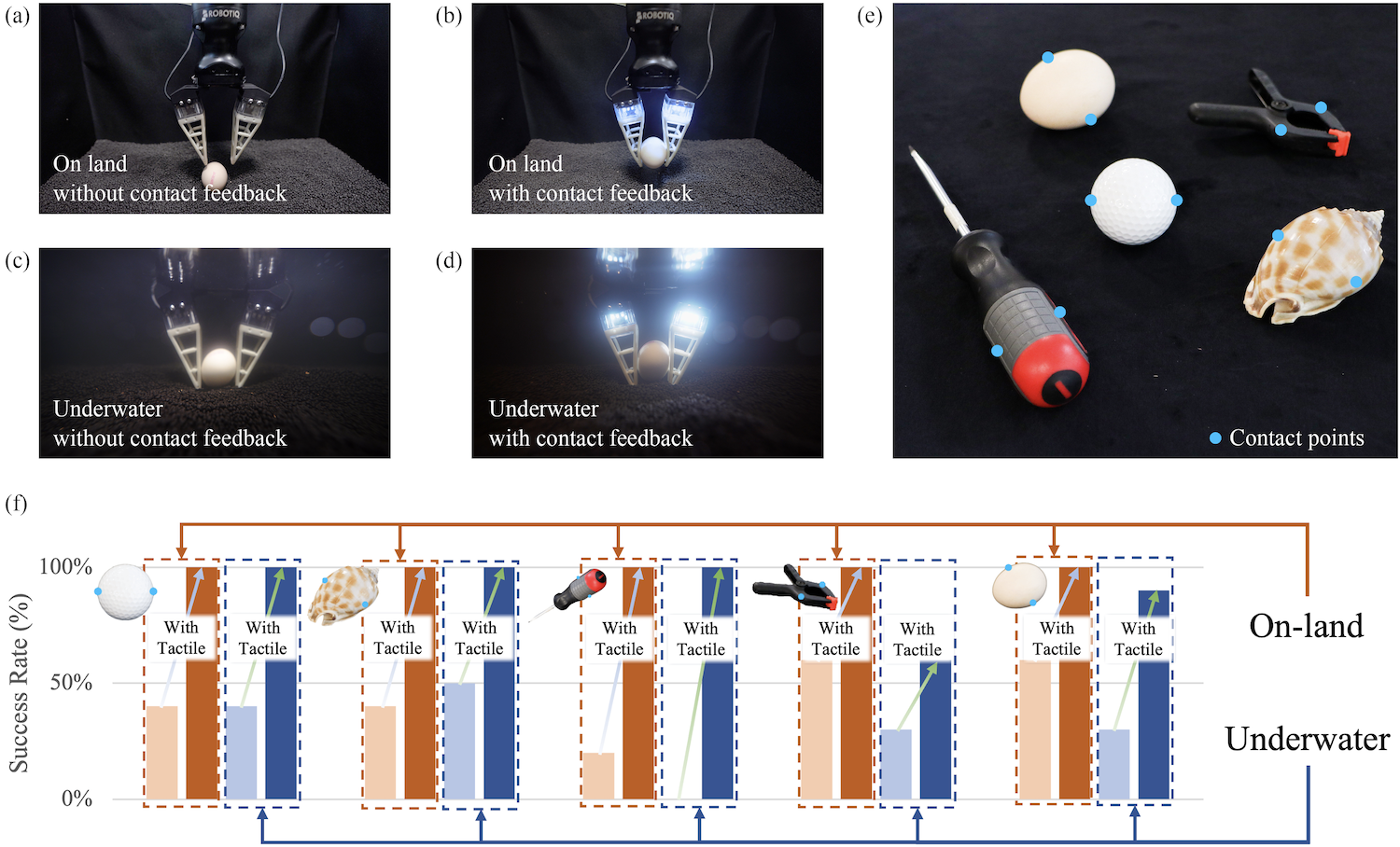}
        \caption{
            \textbf{Tactile grasping results with or without SVAE in on-land and underwater scenarios.} 
            (a) Open-loop object grasping on land with a predefined grasping position. 
            (b) Closed-loop grasping on land with contact force feedback. 
            (c) Open-loop object grasping underwater with a predefined grasping position. 
            (d) Closed-loop object grasping underwater with contact force feedback. 
            (e) Test objects with the predefined grasping points marked. 
            (f) The grasp result summary.
            }
        \label{fig:GraspingTest}
    \end{figure}
    Figures \ref{fig:GraspingTest}(a)\&(c) show the open-loop grasping without force feedback, where the gripper reaches the target grasping point, closes the fingers to a given gripping width and lifts the object. While in the case of closed-loop grasping with force feedback, as illustrated in Figures \ref{fig:GraspingTest}(b)\&(d), the gripper adjusts the gripping width according to the force estimation from SVAE until a grasping confirmation signal is triggered, then lifts the object. While modern learning-based methods are effective in performing grasp planning \citep{Morrison2020Learning, Mahler2019Learning}, here, we manually selected the gripper's grasping positions and gripping widths for each test object shown in Figure \ref{fig:GraspingTest}(e) for the ease of comparison. Figure \ref{fig:GraspingTest}(f) summarizes the ten grasping trials for each object using both methods and reports the average success rate. We added a standard deviation $\sigma = 5$ mm to simulate the uncertainty due to grasping parameters prediction. The average success rate for on-land grasping of the five test objects is $44\%$ without contact feedback. After adding tactile feedback, the success rate is significantly enhanced to $100\%$. After adding tactile feedback, our results show a similar enhancement for underwater grasping, boosting the average success rate from $30\%$ in open-loop grasping to $90\%$ in closed-loop grasping. See Movie S3 in the Supplementary Materials for a video demonstration.

\subsubsection{Tactile Sensorimotor Grasping Adaptability}     
\label{sec:Results-ForceAdjusting}

    This experiment demonstrates sensorimotor grasping using tactile perception enabled by the proposed SVAE model, which is transferable from on-land to underwater scenarios. \textbf{Figure \ref{fig:Control}}(a) demonstrates the overall experiment process. 
    \begin{figure}[!ht]
        \centering
        \includegraphics[width=1\textwidth]{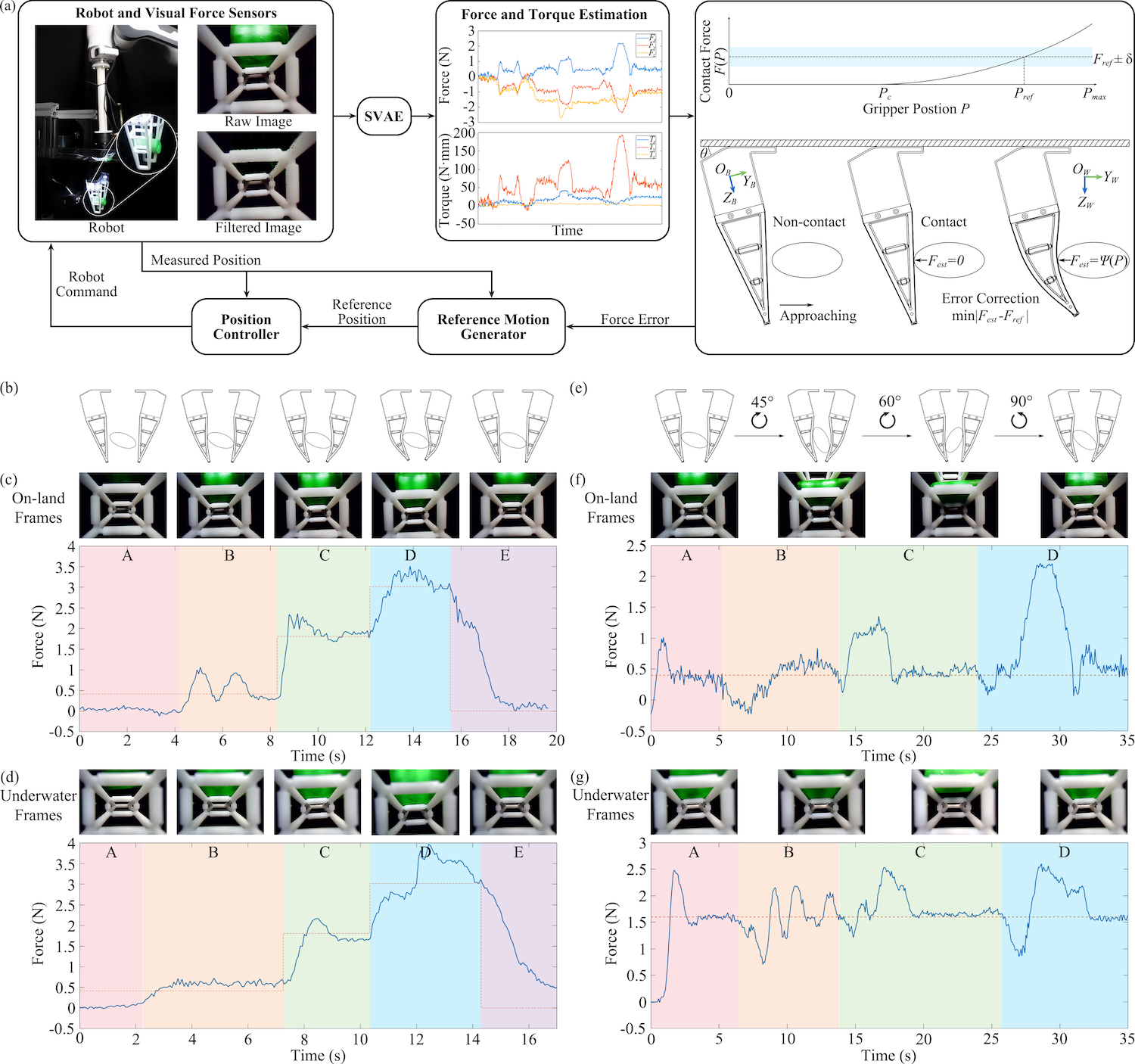}
        \caption{
            \textbf{Tactile perception of soft finger for real-time robotic grasping control.} 
            (a) Experiment setup of the force control tasks with soft tactile sensing. The goal of the force control tasks is to maintain the contact force at the required values by controlling the position of the soft finger. 
            (b)-(d) Desired gripping force following experiments. The gripping force is commanded to a series of expected values, and the corresponding gripper adaptation stages are illustrated in different colors. 
            (e)-(g) In-hand object shape adaptation experiments. The grasped object's shape changes constantly, and the gripper sensitively adjusts its position to maintain the constant gripping force.
            }
        \label{fig:Control}
    \end{figure}
    Once contact begins, the in-finger vision captures the whole-body deformations of the soft finger and feeds the SVAE model with real-time image streams of the physical interaction at 120 Hz. The 6D forces and torques are predicted for both on-land and underwater scenarios, then compared against a pre-defined threshold for reactive grasping. During this process, the width between the two soft fingers is actively adjusted to accommodate the disturbances in object status, i.e., fluidic disturbances for grasping underwater and sudden collision for on-land grasping. We execute the reactive grasping by sending reference position commands to a position controller in the robot system using a motion generator calculated by the measured gripper position and force error detected on the fingers.

    Figure \ref{fig:Control}(b) illustrates the experiment process that tests the gripper system's responsiveness of tactile-reactive grasping, a desirable capability for both on-land and underwater grasping of objects with known properties. After making contact with a slightly rotated tube, fixed, of oval cross-section, we send force commands to the gripper to maintain a contact force at 0.4, 1.6, and 3 N sequentially. Shown in Figures \ref{fig:Control}(c)\&(d) are the recorded force (in blue) against the commanded force (in red) when the experiment was conducted on land and underwater. In both scenarios, the force controller successfully transited and stabilized at the commanded contact force within seconds. See Movie S4 in the Supplementary Materials for a video demonstration. 

    Figure \ref{fig:Control}(e) illustrates another experiment that tests the gripper's capability to maintain a specified contact force while reacting to disturbances, a preferred but more challenging skill for both on-land and underwater grasping of objects with unknown yet delicate properties. In this experiment, the oval-shaped tube is commanded to rotate clockwise in 45$^{\circ}$ and 60$^{\circ}$ first and then counterclockwise in 90$^{\circ}$ to simulate the changing interaction between the gripper and target object. During the process, the gripper needs to maintain a 0.4 N force for the on-land experiment in Figure \ref{fig:Control}(f) and a 1.6 N force for the underwater experiment in Figure \ref{fig:Control}(g). When the target object changes its pose during rotation, the gripper reacts to the shape variation based on the estimated force from SVAE. See Movie S5 in the Supplementary Materials for a video demonstration. We also tested the gripper's reactive grasping under rotational disturbances by turning a cylinder along the $z$-axis. In this case, the SVAE model successfully predicted a torque while the fingers started twisting and commanded the gripper to rotate while maintaining a zero torque $\tau_z$ in reactive motion. See Movie S6 in the Supplementary Materials for a video demonstration.

\section{Discussion}
\label{sec:Discussion}

    It has been a challenge to introduce robotic intelligence into underwater grasping by adding the sense of touch \citep{Mazzeo2022Marine}, which supports delicate and autonomous interactions with the unstructured aquatic environment for scientific activities in environmental, biological, and ocean research. Classical solutions usually take a mechanical approach with various sealing technologies to deal with fluidic pressure and corrosive contamination, suffering trade-offs in engineering flexibility and intelligent perception. This work proposes a vision-based approach to achieve high-performing tactile sensing underwater by combining the emerging advancement in soft robotics and machine learning. The simplicity of the design enables a minimum set of mechanical components, avoiding dynamic seals for enhanced robustness underwater. The soft finger's passive adaptation and in-finger vision enable a seamless integration of the proposed Supervised Variational Autoencoder (SVAE) to learn tactile sensing through visual sensing underwater. The latent representations learned from the SVAE algorithm enable a generative solution to infer the 6D forces and torques during physical interactions underwater with explanatory reasoning. As a result, we successfully transferred the tactile intelligence of the proposed gripper system from on-land to underwater. We achieved tactile force prediction accuracy above $98\%$ along each axis on the test set, using the same hardware with minimal algorithmic parameter adjustment. Real-time grasping experiment results in a lab tank demonstrate the effectiveness of the soft tactile finger for reliable and delicate grasping in both environments. 

    Model explainability and generalization are primarily concerned in machine learning research. Considering the transferability of tactile intelligence from on-land to underwater, we leverage the variational autoencoder model's powerful representation learning capability to express the soft finger's deformation patterns in latent space. Experiment results show that the extracted latent features of the same finger deformation in different environments exhibit a similar distribution. From the statistical inference perspective, learning this low-dimensional deformation pattern is closely related to the dimension reduction problem \citep{Adragni2019Sufficient} where the learned latent representation corresponds to an approximated sufficient statistics of original data \citep{Joyce2008Approximately}. In contrast with conventional dimension reduction techniques such as Principal Component Analysis, the convolutional neural network usually performs better in finding the low-dimensional representation from image data \citep{Hinton2006Reducing}.
    
    Performance degradation of tactile force prediction from on-land to underwater is unavoidable due to the significant change of visual input to the SVAE model. Due to unpredictable fluid dynamics, object grasping underwater is generally more challenging than on land, which is the same case with or without tactile feedback, as demonstrated in our experimental results. However, adding tactile feedback to the gripper system effectively enhanced the reliability of underwater grasping. The finger's network design cuts the fluids while closing, generally causing fewer disturbances for underwater grasping, a common problem usually suffered by fingers with a rigid structure \citep{Stuart2017Ocean}. 

    Tactile perception is generally desired to achieve effective grasping behaviors in underwater environments but is under-explored in research and practice compared with the on-land scenarios \citep{Mazzeo2022Marine, Yan2021Soft}. Operational tasks for underwater robotics are usually associated with a lack of vision, leading to ambiguous recognition of objects \citep{Subad2021Soft}, in which tactile perception plays an important role. Our results in grasping success rates demonstrate the benefit of tactile perception when visual perception is underperforming. Besides, reactive control architecture based on the perception-action cycle can be integrated with our tactile soft finger to achieve more intelligent manipulation underwater.

    Our presented work has several limitations, which need future research for optimization in structural design and learning algorithms. For example, visual input tends to be corrupted by background noise in an underwater environment, which could be alleviated mechanically by adding a layer of silicone skin on the finger surface \citep{Jiang2021RigidSoft}. We could also enhance the tactile perception using XMem \citep{Cheng2022XMem} to track the soft finger's deformation from the in-finger vision or use inpainting algorithms \citep{Yu2023InpaintAnything} to use the in-finger vision for visual perception. The proposed underwater grasping system is yet to be tested on Remotely Operated Vehicles (ROVs) in shallow and deep water for further engineering enhancement.

\section{Materials and Methods}
\label{sec:Method}
\subsection{Formulating the Supervised Variational Autoencoder}
\label{sec:SVAE-Design}

    Accurately deriving the relationship between deformation and force of soft structure can significantly improve the efficacy of visual-tactile sensing \citep{Ma2019Dense}. However, the geometry-dependent deformation of the soft structure is complex to represent. Even though we can discretize the structure with standard node elements using the Finite Element Method, measuring the displacements of corresponding nodes from a monocular camera can be another problem.

    The standard solution involves a two-step method by first building a force-displacement mapping of soft structure and then solving the partial observable vision problem using a monocular camera \citep{Dong2022Towards}. Here, we leverage the interpretability of latent variables in the original VAE model and constrain these learned factors to image-based features of our soft finger deformation using in-finger vision, where the restored force can be measured during training and act as a supervised signal to guide the learning of latent space.

    As shown in Figure \ref{fig:SvaeArchitecture}(a), suppose the collected, labeled data pairs $(\vectorsym{X}, \vectorsym{Y})$ are independent and identically distributed, where $\vectorsym{X}$ and $\vectorsym{Y}$ are images and vectors of force/torque, respectively. The aim is to find an optimal representation $\vectorsym{Z}$ of $\vectorsym{X}$ containing sufficient information about $\vectorsym{Y}$. To tackle both representation learning and force/torque prediction tasks, we extend the optimization framework of the original VAE \citep{Kingma2014AutoEncoding} to an additional supervised task and maximize the log-likelihood function of marginal probability $\log{p_\theta(\vectorsym{X}, \vectorsym{Y})}$:
    \begin{equation}
            \log{p_\theta(\vectorsym{X},\vectorsym{Y})} = L(\theta,\phi;\vectorsym{X},\vectorsym{Y}) + D_{KL}[q_\phi(\vectorsym{Z}|\vectorsym{X})||p_\theta(\vectorsym{Z}|\vectorsym{X},\vectorsym{Y})],
    \end{equation}
    where $L(\theta,\phi;\vectorsym{X},\vectorsym{Y})$ is the evidence lower bound (ELBO) for SVAE, which can be extended as:
    \begin{equation}\label{elbo_x,y}
        \begin{split}
            \log{p_\theta(\vectorsym{X},\vectorsym{Y})}& \geq L(\theta,\phi;\vectorsym{X},\vectorsym{Y})\\ &=E_{\vectorsym{Z}\sim{q_\phi(\vectorsym{Z}|\vectorsym{X})}}[\log{p_\theta(\vectorsym{X}|\vectorsym{Z})}]+E_{\vectorsym{Z}\sim{q_\phi(\vectorsym{Z}|\vectorsym{X})}}[\log{p_\theta(\vectorsym{Y}|\vectorsym{Z})}]-D_{KL}[q_\phi(\vectorsym{Z}|\vectorsym{X})||p_\theta(\vectorsym{Z})].
        \end{split}
    \end{equation}

    In Equation \ref{elbo_x,y}, for continuous data of image, force, and latent variables $\vectorsym{X}, \vectorsym{Y}, \vectorsym{Z}$, the prior distribution of the latent variables $p_\theta(\vectorsym{Z})$, distribution of probabilistic encoder $q_\phi(\vectorsym{Z}|\vectorsym{X})$ and decoder $p_\theta(\vectorsym{X}|\vectorsym{Z})$, $p_\theta(\vectorsym{Y}|\vectorsym{Z})$ are assumed to follow a normal distribution:
    \begin{equation}
        \begin{split}
             p_\theta(\vectorsym{Z})&\sim{\vectorsym{N}(\vectorsym{0},\vectorsym{I})}, \\ 
             q_\phi(\vectorsym{Z}|\vectorsym{X})&\sim{\vectorsym{N}(\vectorsym{Z}_\mu(\vectorsym{X},\phi),\vectorsym{Z}_\sigma(\vectorsym{X},\phi))},\\
             p_\theta(\vectorsym{X}|\vectorsym{Z})&\sim{\vectorsym{N}(\vectorsym{X}_\mu(\vectorsym{Z},\theta),\vectorsym{I})},\\
             p_\theta(\vectorsym{Y}|\vectorsym{Z})&\sim{\vectorsym{N}(\vectorsym{Y}_\mu(\vectorsym{Z},\theta),\vectorsym{I})}.
        \end{split}
    \end{equation}
    Maximization of the new ELBO in Equation \ref{elbo_x,y} is equivalent to maximizing the following optimization object, where the outputs from two decoders are denoted as $\vectorsym{\hat{X}}$ and $\vectorsym{\hat{Y}}$, respectively.
    \begin{equation}\label{J_x,y}
        \begin{split}
           \Tilde{L}(\theta,\phi;\vectorsym{X},\vectorsym{Y})= &-\|\vectorsym{X}- \vectorsym{\hat{X}}\| - \|\vectorsym{Y}- \vectorsym{\hat{Y}}\| + D_{KL}[\vectorsym{N}(\vectorsym{Z}_\mu,\vectorsym{Z}_\sigma)||\vectorsym{N}(\vectorsym{0},\vectorsym{I})].
        \end{split}
    \end{equation}

    Therefore, we build a hierarchical, convolutional, multi-scale model for the encoder and decoder to model the long-range correlations in image data. We use four residual serial blocks to extract and reconstruct image features in different scales \citep{He2016Deep}. The first two terms in Equation \ref{J_x,y} measure reconstruction errors and force/torque prediction errors, respectively. The third term encourages the approximated posterior $q_\phi(\vectorsym{Z}|\vectorsym{X})$ to match the prior $p_\theta(\vectorsym{Z})$, which controls the capacity of latent information bottleneck. Although the derived optimization objective function Equation \ref{J_x,y} implicitly balances the three sources of loss, its optimization can be complex in practice. To resolve this issue, we propose the formulation of Equation \ref{J_alpha,x,y} in Section \ref{sec:Results-Learning} by introducing hyper-parameters $\alpha$ and $\beta$ to Equation \ref{J_x,y}.

    Introducing parameter $\beta \geq 0$ ahead of the third term of Equation \ref{J_alpha,x,y} is inspired by the work of Higgins et al. \citep{Higgins2016betaVAE} so that the optimal $\beta$ can be estimated heuristically in unsupervised scenarios. We tested several choices of $\beta$ in a candidate set, ranging from $10^{-4}$ to $10^2$, and fixed $\beta = 0.1$ in our experiment. 

    All networks were trained on a computer with NVIDIA GTX 1080Ti GPU, a batch size 64, and Adam optimizer \citep{Kingma2014Adam}. Considering the relatively small dataset size, the initial learning rate was set to $5 \times 10^{-5}$ and decreased with the training epoch.

\subsection{Tactile Grasping from On-Land to Underwater}

    We conducted object grasping experiments in a lab tank with and without contact force feedback for both on-land and underwater conditions to demonstrate the benefit of tactile learning in reliable object grasping against environmental uncertainties. We tested the grasping success rate using objects of different shapes, sizes, and materials from on-land to underwater. With the adoption of learned tactile perception, two more tasks were tested to demonstrate intelligent grasping behaviors in both on-land and underwater conditions. 

    \begin{algorithm}
        \caption{Reference Motion Generator}
        \label{alg:MotionGen}
        \begin{flushleft}
            \begin{tabular}{l@{}l}
            \textbf{Input:} & \hspace{2pt}  Raw image $I_{raw}$; Measured position $P_{m}$; 
            \\\mbox{} & \hspace{2pt} Reference force $F_{ref}$
            \\\textbf{Output:} & \hspace{2pt} Reference position $P_{ref}$
            \end{tabular} 
        \end{flushleft}
        \begin{algorithmic}[1]
            \State Initialize force tolerance $\delta$ and control gain $K$ 
            \While{$True$}
                \State Filtered image $I_{f} \gets ColorThreshold(I_{raw})$
                \State Estimated Force $F_{est} \gets VisualForceNet(I_{f})$
                \If{$|F_{ref} - F_{est}|> \delta$}
                    \State $\Delta_{P} = \frac{1}{K}(F_{ref} - F_{est})$ \Comment{position update rule}
                    \State $P_{ref} = P_{m} + \Delta_{P}$  
                \ElsIf{$|F_{ref} - F_{est}| \leq{\delta}$}
                \State $P_{ref} = P_{m}$
                \EndIf
            \EndWhile
        \end{algorithmic}
    \end{algorithm}

    As is shown in Figure \ref{fig:Control}(a), to achieve an intelligent closed-loop grasping behavior, it is an essential requirement for the grasping system to maintain a specified contact force while reacting to the varying environment. The industrial gripper can achieve reliable position commands at a high bandwidth due to the built-in low-level position controller. It is our goal to design a high-level position control policy $u=\pi(P_{m}, F_{est}, F_{ref})$ with measured gripper position $P_{m}$ and estimated contact force $F_{est}$ that achieves the desired contact force $F_{ref}$.
    \begin{equation}
        \pi(P_{m},F_{est},F_{ref
    }) = \mathop{\arg\min}\limits_{u}{|F_{est}-F_{ref}|}.
    \end{equation}

    Thanks to the proposed tactile force proprioceptive soft finger, which acts simultaneously as an end-effector and a sensor, a heuristic control policy $\pi=P_{ref}$ is presented to generate the reference motion command for the inner low-level position control loop, as shown in Algorithm \ref{alg:MotionGen}. The frequency of tactile perception feedback is determined by the computational time cost of the proposed SVAE model and the frame rate of the USB camera. We used a 1060Ti 6G GPU laptop in all grasping experiments, and the average inferring time was 5 ms. As a result, the force controller frequency is bounded by the camera frame rate at 120 Hz. Note that to estimate the contact force parallel to the gripping direction, modification of SVAE output is necessary. See Methods S3 and Methods S4 in the Supplementary Materials for a detailed derivation of controller design.

\section*{Acknowledgements}

    This work was partly supported by the Ministry of Science and Technology of China [2022YFB4701200], the National Natural Science Foundation of China [62206119], the Science, Technology, and Innovation Commission of Shenzhen Municipality [ZDSYS20220527171403009, JCYJ20220818100417038], Guangdong Provincial Key Laboratory of Human-Augmentation and Rehabilitation Robotics in Universities, and the SUSTech-MIT Joint Centers for Mechanical Engineering Research and Education.

\section*{Data Availability Statement}

    The data that support the findings of this study are openly available on GitHub at: 
    \url{https://github.com/bionicdl-sustech/AmphibiousSoftFinger}.

\section*{Supporting Information} 
\renewcommand{\thefigure}{S\arabic{figure}}
\renewcommand{\theequation}{S\arabic{equation}}
\setcounter{figure}{0}
\subsection*{Method S1 on Data Collection for Tactile Learning}
\label{sec:SM-Method1}

    We set up a collaborative robot arm, the Franka Emika Panda, for the automated data collection on land and the validation experiments underwater. As is shown in \textbf{Figure \ref{fig:SM-DataCollection}}(a), data collection using the robotic platform is guarded by sensors without human invention. 
    \begin{figure}[!ht]
        \centering
        \includegraphics[width=0.9\textwidth]{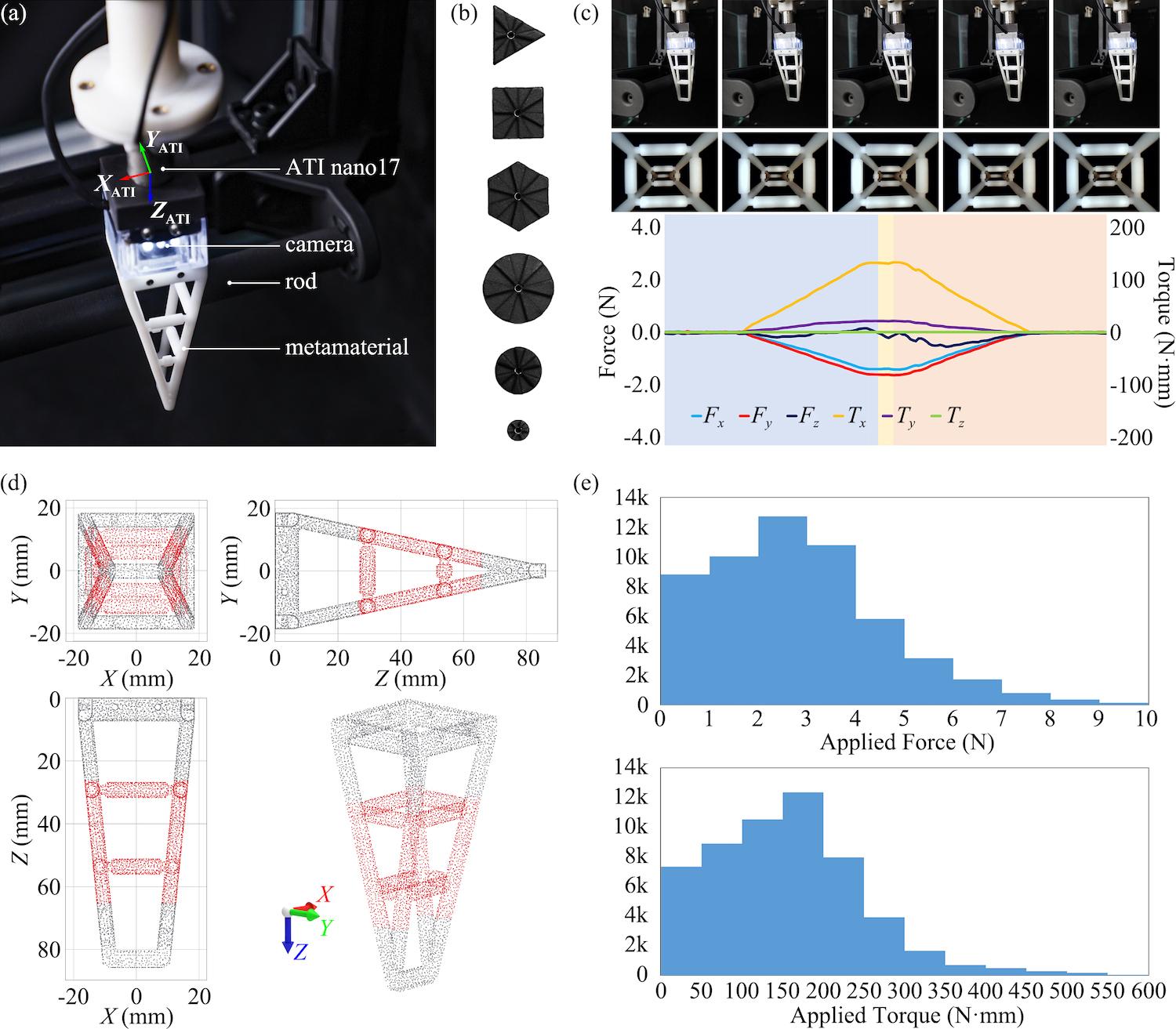}
        \caption{
            \textbf{Automated training data collection on land.} 
            (a) Setup of the data collection platform where a white extension link, an ATI Nano17 force/torque sensor, and the soft finger are mounted on a Franka Emika Panda robot arm in serial and then programmed to make contact with a fixed rod at a random pose. 
            (b) Cross-sectional geometries of the contacting rods. 
            (c) The procedure of collecting a single sample consists of approaching the target pose (blue shaded), recording data (yellow shaded), and leaving for the home pose (pink shaded). 
            (d) The contacting points cover the middle portion of the metamaterial highlighted in red color. 
            (e) Histogram of the contact forces and torques of the collected 30k samples.
            }
        \label{fig:SM-DataCollection}
    \end{figure}
    We use a 6-axis Force/Torque sensor (Nano17 from ATI) to provide labels for supervised visual-tactile learning, which has a force precision of 0.0125 N and a torque precision of 0.0625 N$\cdot$mm. 3D-printed rods with different cross-sectional geometries are used to contact soft fingers to increase the diversity of labeled contact samples, as shown in Figure \ref{fig:SM-DataCollection}(b). During the collection process, one of the rods was fixed each time while the soft finger was commanded to move and make contact with the rod at random poses. We simultaneously recorded the camera image and 6D force/torque measurement right after reaching the target pose, as highlighted in the yellow shaded area in Figure \ref{fig:SM-DataCollection}(c). The random contact pose of the soft finger concerning the rod has two degree-of-freedoms (DOFs) in position and one DOF in orientation and is generated by robot movement command $(x,0,z,0,0,\theta)$ where $x \sim U(0,5)$ cm, $y \sim U(-5,5)$ cm, and $\theta \sim U(-\pi,\pi)$, where $U$ stands for a uniform distribution. The total contact points cover the middle portion of the finger surface as illustrated in Figure \ref{fig:SM-DataCollection}(d). 30k pairs of samples in total are collected, which are split into training, validation, and test subsets at a ratio of 7:1:2. Figure \ref{fig:SM-DataCollection}(e) plots the histograms of the measured forces and torques, which have a range of 10 N for forces and 600 N$\cdot$mm for torques. 

\subsection*{Method S2 on Trade-off Analysis of the Latent Space Vector Dimension}
\label{sec:SM-Method2}

    A classic trade-off in machine learning scenarios is between model precision and model complexity \citep{Tishby2015Deep}. Before building the SVAE model, we trained several different sizes of latent dimension VAE models and tried to find the best deformation representation for tactile force prediction.  

    As illustrated in \textbf{Figure \ref{fig:SM-LatentSpaceAnalysis}}(a), if we try to encode the deformation images using smaller latent codes, a more considerable discrepancy in image reconstruction will be shown, in which case, tactile force prediction will most likely be inaccurate due to the loss of deformation-dependent information. On the other hand, if we approximate the original images with larger latent dimensions, the image reconstruction error is lower. Still, too many deformation-irrelevant details are kept, which leads to poor generalization. Figures \ref{fig:SM-LatentSpaceAnalysis}(b)-(g) show reconstructed images of the soft finger using a 6-dimension to 256-dimension latent space VAE model. Each row of images is generated by selecting a single latent axis and uniformly sampling values in the range of $[-5,5]$, which stand for the response of the reconstruction model with a specific size of latent space to its latent variable. With the increase of latent space size, the details of reconstructed images are more complete, but the response to a single latent dimension becomes less intuitive. In information theory, the complexity of the model is characterized by its coding length, which is proportional to the amount of information between the original data and their new representation \citep{Shwartz2022Opening}. A 32-dimensional latent code is finally chosen for tactile force prediction in our paper, as it is simple enough and provides relatively high accuracy. We admit optimal dimension of latent code that balances the trade-off can be sought more systematically.
    \begin{figure}[!ht]
        \centering
        \includegraphics[width=1\columnwidth]{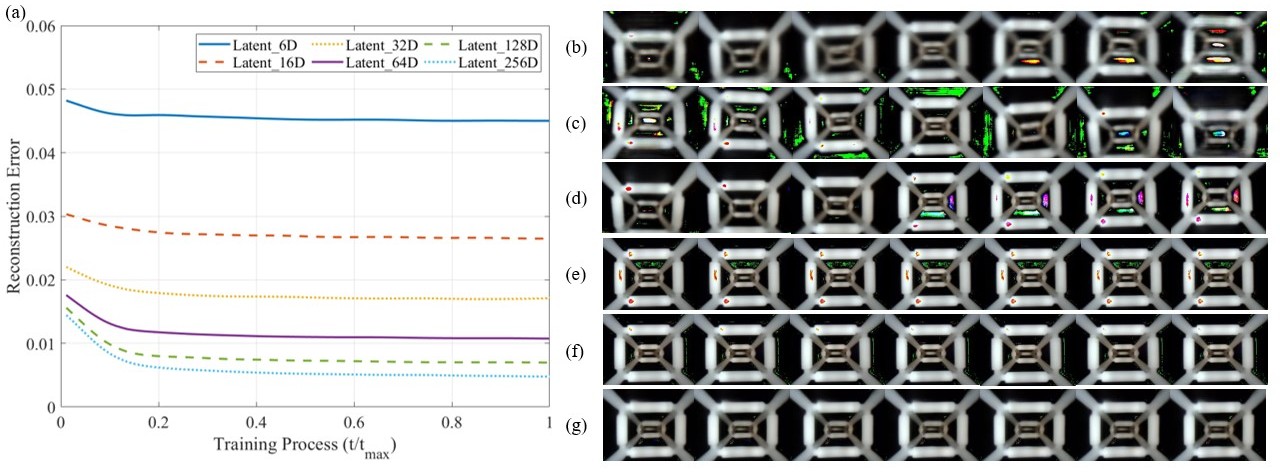}
        \caption{
            \textbf{Latent space dimension analysis using VAE models.} 
            (a) Reconstruction error (MSE) of VAE models with different sizes of latent space during the training. 
            Responses of (b) 6-dimension, (c) 16-dimension, (d) 32-dimension, (e) 64-dimension, (f) 128-dimension, and (g) 256-dimension latent space VAE models to their single latent variable ranging from $[-5,5]$.
            }
        \label{fig:SM-LatentSpaceAnalysis}
    \end{figure}
    
\subsection*{Method S3 on Contact Force Estimation}
\label{sec:SM-Method3}

    Figure \ref{fig:Control}(a) shows that image pre-processing for background segmentation is necessary to use the soft finger for force inference. Due to the distinct appearance difference between the finger and grasped object, a reliable image thresholding operation based on color is used. The output image from the color-based thresholding filter contains only the deformed metamaterial, which is sent to SVAE for force and torque prediction.

    Although the contact configuration between grasped object and the soft finger is arbitrarily complex, our perception finger can always predict a composite force and torque loaded at the base of the finger, which we denote as \!$\vectorsym{~^B\!F}\in{R^{6}}$, expressed in reference frame $B$ ($O_{B}\!\!-\!\!Y_{B}Z_{B}$ in Figure \ref{fig:Control}(a)). To measure the force parallel to the gripping direction $F_{est}\in{R}$, it is necessary to transform the spatial force \!\! $\vectorsym{~^B\!F}$ from the finger base reference frame $B$ to the world reference frame $W$ ($O_{W}\!\!-\!\!Y_{W}Z_{W}$ in Figure \ref{fig:Control}(a)) and project along the gripping direction.

    Denoting the rotation transformation matrix from frame $B$ to frame $W$ as\!\! $~^W\!R_{B}\in{SO(3)}$, the estimated contact force along the $Y$ axis of the world frame can be approximated as:
    \begin{equation}\label{Contact Force}
        F_{est}= \begin{bmatrix}
                    0 & 0 & 0\\
                    0 & 1 & 0\\
                    0 & 0 & 0
                    \end{bmatrix}   \begin{bmatrix}
                    ~^W\!R_{B} &     0_{3\times3}
                    \end{bmatrix} \vectorsym{~^B\!F}.
    \end{equation}

\subsection*{Method S4 on Details of Tactile Force Tracking Controller}
\label{sec:SM-Method4}

    Algorithm 1 describes the solution to the closed-loop tactile force tracking control policy $u=\pi(P_{m}, F_{est}, F_{ref})$ with measured gripper position $P_{m}$ and estimated contact force $F_{est}$ that achieves the desired contact force $F_{ref}$. Here we give design details of the proposed control policy $\pi=P_{ref}$. Figure \ref{fig:Control}(a) gives the approximate relationship between contact force and gripper position.
    \begin{equation}\label{position-force-func}
        F(P)=\left\{
                \begin{array}{ccc}
                    0 & & 0 \leq{P} < P_{c} \\
                    \Psi(P)& & P_{c} \leq{P} < P_{max}
                \end{array}
            \right.,
    \end{equation}
    where $P_{c}$ is the distance for the gripper to move from the non-contact state to the contact state. $P$ is the gripper position. $P_{max}$ is the maximum allowed position for the gripper to move. Inside the interval $[0, P_{c}]$, the object is not in contact with the gripper, while in the interval $[P_{c}, P_{max}]$, the contact force is monotonically increasing with position $P$, which indicates that:
    \begin{equation}\label{mono}
        \frac{\partial{F}}{\partial{P}}=\Lambda\geq{0},
    \end{equation}
    where $\Lambda$ is contact-state-dependent and related to the stiffness of the soft finger.

    \begin{prop}\label{proposition}
        Given control gain $K > \frac{\Lambda}{2}$,
        any feasible reference contact force $F_{ref}$ can be achieved within specified force tolerance $\delta > 0$, using proposed Algorithm 1 :
        \begin{equation}
            |F_{est}-F_{ref}| \leq{\delta}.
        \end{equation}
    \end{prop}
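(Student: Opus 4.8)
The plan is to read Algorithm~\ref{alg:MotionGen} as a discrete-time fixed-point iteration on the contact force and to show that the gain condition $K > \Lambda/2$ turns it into a contraction whose fixed point realizes the target force. Let $P_k$ denote the measured position $P_m$ at the $k$-th control cycle and $F_k = F(P_k)$ the corresponding contact force under the model~\eqref{position-force-func}; as long as the error $e_k := F_{ref} - F_k$ has $|e_k| > \delta$, lines~6--7 give $P_{k+1} = P_k + \tfrac{1}{K}\,e_k$. First I would use the monotonicity bound~\eqref{mono}: since $0 \le \partial F/\partial P \le \Lambda$ along the segment joining $P_k$ and $P_{k+1}$ (the slope being $0$ on the non-contact part $[0,P_c)$), the increment satisfies $F_{k+1} - F_k = \bar\Lambda_k\,(P_{k+1} - P_k)$ for some effective slope $\bar\Lambda_k \in [0,\Lambda]$. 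Substituting the update rule collapses the closed loop to the scalar recursion
\[
 e_{k+1} = \Bigl(1 - \tfrac{\bar\Lambda_k}{K}\Bigr)\, e_k .
\]

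Next I would bound the multiplier. Because $0 \le \bar\Lambda_k \le \Lambda$ and $K > \Lambda/2$, the ratio $\bar\Lambda_k/K$ lies in $[0,2)$, so $|1 - \bar\Lambda_k/K| \le 1$, with equality only where the finger stiffness vanishes. In the contact regime $[P_c,P_{max}]$ the soft finger has strictly positive stiffness, $\bar\Lambda_k \ge \Lambda_{\min} > 0$, hence $|1 - \bar\Lambda_k/K| \le \rho$ with $\rho := \max\{\,|1 - \Lambda_{\min}/K|,\ |1 - \Lambda/K|\,\} < 1$ and $|e_{k+1}| \le \rho\,|e_k|$. Iterating, $|e_k| \le \rho^{\,k}\,|e_0| \to 0$, so after finitely many cycles $|e_N| \le \delta$; at that cycle the second branch of Algorithm~\ref{alg:MotionGen} applies, $P_{ref} = P_m$, the position and hence the force are held, and $|F_{est} - F_{ref}| \le \delta$ persists thereafter. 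Feasibility of $F_{ref}$, i.e. $F_{ref} = \Psi(P^\ast)$ for some $P^\ast \in [P_c,P_{max}]$, makes this equilibrium admissible; and a start in the non-contact zone $[0,P_c)$ is handled first, since there $e_k = F_{ref} > \delta$ is fixed in sign and the update advances $P$ by the constant increment $F_{ref}/K$, so the iterate reaches $[P_c,P_{max}]$ after at most $\lceil (P_c - P_0)K/F_{ref}\rceil$ cycles, whereupon the contraction estimate takes over.

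The step I expect to be the main obstacle is the possible failure of strict contraction wherever $F$ is locally flat inside the contact regime ($\bar\Lambda_k = 0$), which is why I build in a positive stiffness floor $\Lambda_{\min}$; without that hypothesis one must argue separately that the iterate cannot stagnate, which it cannot, because on a maximal flat interval with $F \ne F_{ref}$ the update is a nonzero, fixed-sign position step that exits the interval in finitely many cycles, adding only finitely many stall steps before the geometric decay resumes. I would also note that the argument rests on the idealized force--position law~\eqref{position-force-func}: the SVAE prediction $F_{est}$ enters only through $F_k$, so a bounded sensing error merely inflates the effective tolerance $\delta$. Finally, saturation at $P_{max}$ is a non-issue because raising $K$ shrinks both the step size and $\rho$ simultaneously, so $K$ may be chosen large enough to keep every iterate inside $[0,P_{max}]$ without violating $K > \Lambda/2$.
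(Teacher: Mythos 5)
Your proposal is correct and rests on the same core mechanism as the paper's proof: apply the mean value theorem to $F$ along the segment from $P_m$ to $P_{ref}$, collapse the update rule into the error recursion with multiplier $1-\bar\Lambda/K$, and use $K>\Lambda/2$ to bound that multiplier's magnitude by one. The difference is in how far each argument goes. The paper stops at the one-step inequality $|F(P_{ref})-F_{ref}|\leq|1-\tfrac{\Lambda}{K}||F(P_m)-F_{ref}|$, treating $\Lambda$ as if it were the constant slope at the intermediate point, and then simply asserts that the tolerance is eventually met; strictly speaking, a per-step strict decrease alone does not force $|e_k|$ below $\delta$, since the slope (and hence the contraction factor) is state-dependent and may degenerate. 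You supply exactly the missing pieces: you keep the effective slope $\bar\Lambda_k\in[0,\Lambda]$ honest, introduce a stiffness floor $\Lambda_{\min}>0$ to get a uniform factor $\rho<1$ and genuine geometric decay, and separately dispose of the pre-contact zone $[0,P_c)$, locally flat stretches, and saturation at $P_{max}$ — none of which the paper addresses. So this is the same route, executed more rigorously; the only cost is the extra (physically reasonable, and explicitly flagged) hypothesis $\Lambda_{\min}>0$, or else the finite-stall argument you give in its place.
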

    
    \begin{proof}
        For every measured gripper position $P_{m}$, we can get an estimated contact force from SVAE:
        \begin{equation}
            F_{est} = F(P_{m}).
        \end{equation}
    
        The reference motion generator in Algorithm 1 for condition when $|F_{ref}-F_{est}| > \delta$, is expressed as following:
        \begin{equation}
            \begin{split}
                P_{ref} & = P_{m}+\Delta_{P}\\ & = P_{m}+\frac{F_{ref}-F(P_{m})}{K}.  
            \end{split}
        \end{equation}
    
        Due to the continuity of function $\Psi(P)$ in Equation \ref{position-force-func} within corresponding interval and monotonicity of Equation \ref{mono}, for $t\in{(0,1)}$, we can see that: 
        \begin{equation}\label{spreading}
            \begin{split}
                |F(P_{ref})-F_{ref}| & = |F(P_{m}+\Delta_{P})-F_{ref}|\\
                &= |F(P_{m})+\Delta_{P}\frac{\partial{F}}{\partial{P}}|_{P=P_{m}+t\Delta_{P}}-F_{ref}|\\
                &=|F(P_{m})-F_{ref}+\frac{\Lambda}{K}(F_{ref}-F(P_{m}))|\\
                &=|(1-\frac{\Lambda}{K})(F(P_{m})-F_{ref})|\\
                &\leq{|(1-\frac{\Lambda}{K})||F(P_{m})-F_{ref}|}.
            \end{split}
        \end{equation}
        
        Given the chosen control gain condition in Proposition \ref{proposition}, we have:
        \begin{equation}\label{range_K}
           K>\frac{\Lambda}{2}\Rightarrow{|(1-\frac{\Lambda}{K})|<1},
        \end{equation}
        then, the Equation \ref{spreading} becomes:
        \begin{equation}
          |F(P_{ref})-F_{ref}|<|F(P_{m})-F_{ref}|=|F_{est}-F_{ref}|.
        \end{equation}
        
        This indicates that the reference motion command generated by Algorithm 1 will always ensure a minor contact force error, eventually satisfying the specified force tolerance condition.  
    \end{proof}

\section*{Supporting Videos}
\label{sec:SM-Videos}

\begin{itemize}
    \item Movie S1. Amphibian Grasping with Visual-Tactile Soft Finger.
    \item Movie S2. Real-time Force/Torque Prediction.
    \item Movie S3. Object Grasping Success Rates Experiments with/without Contact Feedback.
    \item Movie S4. Contact Force Following Experiments.
    \item Movie S5. Object Shape Adaptation Experiments.
    \item Movie S6. Robot End-effector Reaction to Soft Finger Twist.
\end{itemize}

\bibliographystyle{unsrtnat}
\bibliography{references}

\end{document}